\DeclareMathOperator*{\argmin}{arg\,min}
\newcommand{\redoB}[1]{\textcolor{blue}{#1}}
\newtheorem{thm}{Theorem}
\newtheorem{lemma}{Lemma}
\newtheorem{remark}{Remark}
\newtheorem{assum}{Assumption}
\tikzset{edge/.style = {->,> = latex}}
\title{\LARGE \bf
Federated Learning in Wireless Networks via Over-the-Air Computations*

%\author{Halil Yi\u{g}it \"{O}ks\"{u}z$^{1,2}$, Fabio Molinari$^{1}$, J\"{o}rg Raisch$^{1,2}$ }
\author{Halil Yigit Oksuz$^{1,2,3}$, Fabio Molinari$^{1}$, Henning Sprekeler$^{2,3}$, J\"{o}rg Raisch$^{1,2} $ }

\thanks{* This work has been funded by the Deutsche Forschungsgemeinschaft (DFG, German Research Foundation) under Germany’s Excellence Strategy – EXC 2002/1 “Science of Intelligence” – project number 390523135.}

\thanks{$^{1}$ H.Y. Oksuz, F. Molinari, and J. Raisch are with the Control Systems Group at Technische  Universität Berlin, Germany. {\tt\small \{oksuz@tu-berlin.de\}, \{molinari,raisch\}@control.tu-berlin.de}. } 

\thanks{$^{2}$ H.Y. Oksuz, H. Sprekeler, and J. Raisch are with Exzellenzcluster Science of Intelligence, Technische Universität Berlin, Marchstr. 23, 10587, Berlin, Germany.}

\thanks{$^{3}$ H.Y. Oksuz and H. Sprekeler are with the Modelling Cognitive Processes Group at Technische  Universität Berlin, Germany.
}
}
\begin{document}

\maketitle
\thispagestyle{empty}
\pagestyle{empty}

\begin{abstract}                % Abstract of not more than 250 words.
	In a multi-agent system, agents can cooperatively learn a model from data by exchanging their estimated model parameters, without the need to exchange the  locally available data used by the agents. This strategy, often called \textit{federated learning}, is mainly employed for two reasons: (i) improving resource-efficiency by avoiding to share potentially large data sets and (ii) guaranteeing privacy of local agents' data.  Efficiency can be further increased by adopting a beyond-5G communication strategy that goes under the name of Over-the-Air Computation. This strategy exploits the interference property of the wireless channel. Standard communication schemes prevent interference by enabling transmissions of signals from different agents at distinct time or frequency slots, which is not required with Over-the-Air Computation, thus saving resources. In this case, the received signal is a weighted sum of transmitted signals, with unknown weights (fading channel coefficients). State of the art papers in the field aim at reconstructing those unknown coefficients. In contrast, the approach presented here does not require reconstructing channel coefficients  by  complex encoding-decoding schemes. This improves both efficiency and privacy.
\end{abstract}

%===============================================================================

%%%%mics.%s a sig%%%%%%%%%%%%%%%%%%%%%%%%%%%%%%%%%%%%%%%%%%%%%%%%%%%%%%%%%%%%%%%%%%%%%%%%%%%
\vspace{2mm}
\section{INTRODUCTION}
\vspace{1mm}

Over the last decade, topics related to machine learning have attracted a great deal of attention due to their success in many application areas (e.g., \cite{lecun2015deep,goodfellow2016deep,nedic2020distributed}). As computational power increases, we have smaller but more intelligent and powerful devices, which are able to handle big volumes of data and more complex computations.

Data is often distributed over these intelligent devices or agents, such as smartphones, personal computers and distributed data centers. In a centralized learning approach, all data are collected in a computationally powerful machine, such as a cloud-based unit or a server, on which the model is trained \cite{chen2019deep}. However, this approach may not be suitable for specific applications requiring privacy, resource efficiency, and  low-latency. For instance, due to privacy concerns, smartphone users might not be willing to share their data; moreover, with an increasing number of agents, more communication resources must be allocated to share the agents' data with the central unit. However, it is also possible to train a model without centralized data collection. In this case, agents individually train models on their respective data and share their individually trained models (in the form of parameter vectors) with the central unit. Sharing parameter vectors is, in general, less expensive in terms of communication resources than sharing all the local data sets. This approach is often referred to as \textit{federated learning}, and allows more private and communication efficient learning (see \cite{mcmahan2017communication, smith2017federated,yang2019federated,bonawitz2019towards}).

Even though federated learning allows improved privacy by avoiding to share individual data sets, sharing local model parameters with the central unit might still reveal sensitive information \cite{mcmahan2017learning, kairouz2021advances, li2020federated}. It is possible to tackle this privacy problem with popular techniques like encryption or differential privacy \cite{dwork2006calibrating}, but they come at the price of lower efficiency and performance  \cite{mcmahan2017learning,bonawitz2017practical}. 

As the number of agents and the number of model parameters increase, the communication load on the overall system increases as well \cite{konevcny2016federated,li2020federated,kairouz2021advances}.  To cope with this problem, one can carry out local training on agents for multiple steps or utilize \textit{quantization} and/or \textit{sparsification} on the model updates in order to accomplish an efficient compression; another approach is allowing only a subset of agents to transmit at specific time steps \cite{alistarh2017qsgd,alistarh2018convergence,reisizadeh2020fedpaq,chen2021communication,mitra2021linear}. However, most of these techniques are not resource efficient in the sense that they increase the need of bandwidth or the number of communication rounds, which in general leads to a decrease in total throughput and learning speed as also observed in \cite{molinari2021max,frey2021over}. 

Instead, we consider the so-called Over-The-Air computation approach, e.g., \cite{zhu2021over}, to improve communication efficiency. When multiple agents transmit at the same time and in the same frequency band, signals are affected by the physical phenomenon of interference. Standard communication protocols prevent interference by transmitting signals orthogonally: in TDMA (Time Division Multiple Access), agents are assigned different time slots when they can transmit, whereas in FDMA (Frequency Division Multiple Access), different frequency bands are allocated to different users.

The philosophy of Over-The-Air Computation is to exploit interference rather than combat it, to increase communication efficiency. For example, in
a system composed of $N$ agents, each of which has to transmit an $m$-dimensional parameter vector $\theta \in \Re^{m}$ to a central unit, 
TDMA or FDMA would require $mN$ orthogonal transmissions (multiplexed in time or frequency), whereas Over-The-Air Computation requires only $m$ orthogonal transmissions. Due to its advantages, federated learning has been carried out by \cite{yang2020federated,amiri2020machine,sery2021over} via over-the air computation.

With this in mind, the main contributions of this paper can be summarized as follows:

\begin{itemize}
\item  Unlike the studies by \cite{yang2020federated} and \cite{sery2021over}, we do not assume to know (nor to be able to reconstruct) channel coefficients, but we present an algorithm
that can deal with their unknown nature. We will not need extra pre-processing to reconstruct the channel,  which makes the proposed scheme more time and resource efficient. 
\item  Privacy is inherently guaranteed by the unknown nature of the channel. 
The central unit will not be able to reconstruct information transmitted by individual agents.
\end{itemize}

The remainder of this paper is organized as follows: we describe the problem setup in Section II. In Section III, we present the proposed federated learning algorithm and prove its convergence. A numerical example is presented in Section IV. Concluding remarks are given in Section V.

\subsection*{Notation}

The set of real numbers is denoted by $\Re$, $\Re^{m}$ represents $m$-dimensional Euclidean space. $\mathbb{N}$ and $\mathbb{N}_0$ respectively denote the set of natural numbers and the set of nonnegative integers. For a vector $x\in \Re^{m}$, $x^{T}$ denotes its transpose.  The 2-norm of vector $x$ is denoted by $||x||$. The expected value of a random variable $p$ is denoted by $\mathbb{E}[p]$. Given a finite set $S$, its cardinality is denoted by $|S|$.  For a differentiable function $f: \Re^{m}\to \Re$, $\nabla f(\mathbf{x})$ represents the gradient of the function $f$ at $x\in \Re^{m}$. The projection of  $x\in \Re^{m}$ onto a nonempty closed convex set $S\subset \Re^{m}$ is denoted by $\mathbf{P}_{S}(x)$, where $\mathbf{P}_{S}(x) = \argmin_{s\in S}|| s-x ||$.

\vspace{2mm}
\section{Problem Description}

\subsection{Federated Learning with Constraints}

Consider a system of $N$ agents connected to a server or cloud based unit, whose objective is to carry out a machine learning task. Each agent can access different portions of the dataset, and has an individual local cost function. We denote the set of data available to the $i$-th agent by $D_i = \{d^{n}_i \}_{n=1}^{ |D_i|}$ and use $\mathcal{L}_i(d^{n}_i,\mathbf{\theta})$ to represent the value of the cost function of a model with parameter $\mathbf{\theta}\in \mathcal{R}^m $. In a supervised learning setting, the dataset $D_i$ consists of pairs of inputs and targets, i.e., $d^{n}_i = (u^{n}_i , z^{n}_i )$,  where $u^{n}_i$ and $z^{n}_i$ represent, respectively, input and  target data.  Moreover, the private  local cost function of agent $i$ can be expressed as \\[-2mm]
\begin{equation}\label{eq:local_cost}
f_i(\theta)= \frac{1}{|D_i|}\sum_{n=1}^{|D_i|} \mathcal{L}_i(d^{n}_i,\theta),
\end{equation} 

If the global cost function is defined as the average of all local cost functions, i.e., \\[-2mm]
\begin{equation}\label{eq:def_global}
\mathcal{L}(\theta) = \frac{1}{N}\sum_{i=1}^{N} f_i(\theta),
\end{equation} 
\noindent
then, the objective of the overall system is to cooperatively solve the following constrained optimization problem \\[-2mm]
\begin{equation}\label{eq:global_from_local}  
\theta^{*} = \argmin_{\theta\in \Theta }  \mathcal{L}(\theta) =\argmin_{\theta\in \Theta }\frac{1}{N} \sum_{i=1}^{N} f_i(\theta),
\end{equation} 
\noindent
where $\Theta  \subset \Re^m$ is a nonempty constraint set. In what follows, we denote $\mathcal{L}(\theta^{*})$ by $\mathcal{L}^{*}$. Moreover, the set of optimal solutions is defined as \\[-2mm] $$\Theta^*=\{ \theta \in \Theta \textbf{\hspace{1mm}}|\textbf{\hspace{1mm}} \frac{1}{N}\sum_{i=1}^{N} f_i(\theta)   =\mathcal{L}^{*}\}.$$
If the entire dataset, i.e., \\[-2mm] 
$$D = D_1 \cup  D_2  \cup \vspace{1mm} \cdots  \cup D_N$$ 
were known to the server, one could employ a centralized gradient descent based optimization to solve the above global learning task \cite{nedic2020distributed}.
However, in the considered federated learning approach, each agent can access only its own dataset, on which it trains its own model. After local optimization steps, the current versions of the local parameter estimates are transmitted to the server, where they are aggregated. The aggregated version is then transmitted to the agents for the next optimization step (see Fig. \ref{fig:my_label}).

\begin{figure}
\includegraphics[scale=0.31]{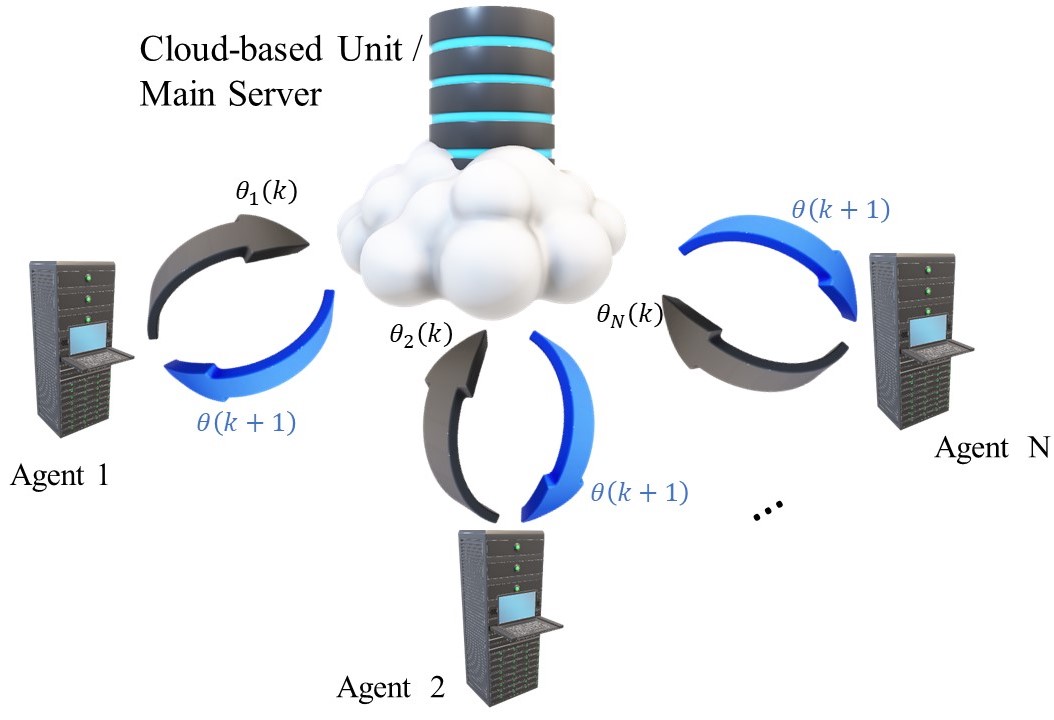}
\caption{An illustration of a federated learning setting.}
\label{fig:my_label}
\end{figure}

\subsection{Over-the-Air Communication}\label{sec:wmac}

In wireless communication systems, the wireless multiple access channel (WMAC) model has been extensively used to characterize communication between multiple transmitters 
and a single receiver over fading channels, e.g.,  \cite{ahlswede1973multi,Giridhar2006}.
Throughout this paper, we employ the WMAC-based communication model described
by~\cite{molinari2021max,molinari2022over}. In this model, multiple (say $N$) agents simultaneously send information $\{\mathbf{s}_i(k)\in\Re^{m}\}_{i=1}^N$ in the same frequency band. This information is corrupted  by the channel and superimposed by the receiver, i.e., the received information is \\[-2mm]
\begin{equation}\label{eq:interference}
\mathbf{r}^{rec}(k) =\sum_{i=1}^{N}\alpha_{i}(k)\mathbf{s}_i(k),
\end{equation}
\noindent
where the $\alpha_{i}(k)$ are unknown time-varying positive channel coefficients.

As indicated above, employing Over-the-Air computation for federated learning provides the following benefits:

\begin{itemize}
\item \textbf{privacy}: the channel coefficients $\{\alpha_i(k)\}_{i=1}^N$ in (\ref{eq:interference})
are unknown, thus it is impossible for
the central unit to individually reconstruct $\{s_i(k)\}_{i=1}^N$ from $\mathbf{r}^{rec}(k)$;
\item \textbf{efficiency}: as shown by \cite{frey2021over}, Over-the-Air Computation can achieve better-than-linear scaling of the communication cost as the number of transmitters grows.
\end{itemize}

Unlike the algorithm suggested by \cite{sery2021over}, our approach does not require any knowledge of the channel coefficients, nor do they need to be reconstructed at each time step, thus removing one source of complexity.

\begin{algorithm}[tb]
\caption{FedCOTA}
\begin{algorithmic}[1]
	\REQUIRE  $\theta(0)\in\Theta$.\vspace{0.5mm}
	\ENSURE
	\FOR {each time step $k\in \mathbb{N}_0$}
	\STATE \text{The server broadcasts $\theta(k)$} 
	\STATE \text{Each agent $i$ updates its local variables:}
	\begin{align}
		\label{eq:local_agent_update}
		&\theta_i(k) = \theta(k) -  \eta(k)\nabla f_i\big(\theta(k)\big)\\ 
		&\varrho_i(k)=1
	\end{align}\vspace{-2mm}
	\STATE \text{Each agent $i$ transmits $\theta_i(k)$ and $\varrho_i(k)$}
	\STATE \text{The server receives:}
	\begin{align}
		\label{eq:ser_rec_1}
		\theta^{\text{rec}}(k)&=\sum_{i=1}^{N}\alpha_{i}(k)\theta_i(k)\\  
		\label{eq:ser_rec_2}
		\varrho^{\text{rec}}(k)&=\sum_{i=1}^{N}\alpha_{i}(k)\varrho_i(k)
	\end{align}
	\STATE \text{The server updates:}
	\begin{align}\label{eq:server_comp}
		\theta(k+1) &= \mathbf{P}_\Theta\Bigg(\frac{\theta^{\text{rec}}(k)}{\varrho^{\text{rec}}(k)}\Bigg)
	\end{align}
	\ENDFOR
\end{algorithmic}
\label{alg:oafl}
\end{algorithm}

\section{Federated Constrained Over-the-Air Learning (FedCOTA) Algorithm}
\label{sec:TheRes}

In this section, we first introduce the proposed algorithm that allows agents to carry out federated learning via over-the-air communication. Then, we will investigate the convergence properties of the proposed algorithm.

\subsection{Algorithm}

The FedCOTA algorithm is summarized in Algorithm 1. At the beginning, the server broadcasts to agents $\theta(0)\in \Theta$. Then, through iterations, each agent computes its own parameter vector $\theta_i(k)$ by using the local update rule (\ref{eq:local_agent_update}).  Afterwards, all agents transmit simultaneously (and in the same frequency band) their local parameter vectors $\theta_i(k)$ to the central unit. Then, again simultaneously and in the same frequency band a constant $\varrho_i(k)=1$ is transmitted by all agents. Because of the superposition property of the wireless channel (see Section~\ref{sec:wmac}), the server receives (\ref{eq:ser_rec_1}) and (\ref{eq:ser_rec_2}) at time step $k$. Finally, the server computes (\ref{eq:server_comp}) thus obtaining $\theta(k+1)$,  which can be written as 
\begin{align}\label{eq:developFEDOTA}
\nonumber
\theta(k+1) &= \mathbf{P}_\Theta\Bigg( \frac{\theta^{\text{rec}}(k)}{\varrho^{\text{rec}}(k)}\Bigg)= \mathbf{P}_\Theta\Bigg(\sum_{i=1}^{N}\frac{\alpha_{i}(k)}{ \sum_{i=1}^{N}\alpha_{i}(k)}	\theta_i(k)\Bigg) \\
&= \mathbf{P}_\Theta\Bigg(\sum_{i=1}^{N}h_{i}(k)	\theta_i(k)\Bigg)
\end{align}
\noindent
where $\mathbf{P}_\Theta$ is the projection operator onto the set $\Theta$,  the $\alpha_{i}(k)$ are the unknown time-varying positive real channel coefficients, and the $h_{i}(k)=\frac{\alpha_{i}(k)}{ \sum_{i=1}^{N}\alpha_{i}(k)}$ are the corresponding normalized channel coefficients. By construction, the $h_{i}(k)$ are positive and 
\begin{equation}\label{eq:eq_asmp_h}
\sum_{i=1}^{N} h_{i}(k)=1,
\end{equation}
\noindent
for all $k\geq 0$.

\subsection{Preliminaries}
\label{sec:TheResStrongcONV}

The following assumptions, also made in similar papers in the field, will hold throughout the paper.

\begin{assum}
\label{ass:conset}
The constraint set $\Theta\subset\Re^{m}$ is convex and compact. As a consequence (see {\cite[Theorem~2.41, p.40]{rudin1976principles}}),   $\Theta$ is then also closed and bounded.
\end{assum}

\begin{assum}
\label{ass:f_i}
The cost functions $f_i(\theta)$  are continuously differentiable and strongly convex with $L$-Lipschitz continuous gradients, i.e., for any $ \theta_1,\theta_2 \in \Theta \subset \Re^{m}$ and for all $i=1,2,\ldots, N$, the following inequalities hold:
\begin{itemize}
	\item[$(i)$] $\exists \mu_i >0 $ such that  
	\begin{align}
		\label{eq:str_conv}
		f_i(\theta_2)-f_i(\theta_1)  \geq  \nabla f_i(\theta_1)^T(\theta_2-\theta_1) + \frac{\mu_i}{2} ||\theta_2-\theta_1||^2, \\[-8mm] \nonumber
	\end{align}
	\item[$(ii)$]$\exists L_i >0 $ such that %
	\begin{align}
		||\nabla f_i(\theta_1)-\nabla f_i(\theta_2)|| \leq L_i ||\theta_1-\theta_2||. \\[-4mm] \nonumber
	\end{align}
\end{itemize}
\end{assum}

\begin{remark}
\label{ass:L_i}
Note that the global cost function $\mathcal{L}(\theta)$  is then also differentiable and strongly convex with $L$-Lipschitz continuous gradient, i.e.,
\begin{align*}
\centering
\mathcal{L}(\theta_2)-&\mathcal{L}(\theta_1)  \geq  \nabla \mathcal{L}(\theta_1)^T(\theta_2-\theta_1) + \frac{\mu}{2} ||\theta_2-\theta_1||^2 \\
&||\nabla \mathcal{L}(\theta_1)-\nabla \mathcal{L}(\theta_2)|| \leq L ||\theta_1-\theta_2||, 
\end{align*}  
\noindent
where  $ \mu = \frac{1}{N}\sum_{i=1}^{N}\mu_i$ and $L = \frac{1}{N}\sum_{i=1}^{N}L_i$ (see \cite{yuan2016convergence}). 
\end{remark}
\vspace{0.5mm}
\begin{remark}
\label{ass:L_i_2}
Note that Assumptions \ref{ass:conset} and \ref{ass:f_i} imply that the local and global cost functions have bounded gradients on $\Theta$.  
\end{remark}

Note that Assumptions \ref{ass:conset} and \ref{ass:f_i}  have been widely  utilized in the federated learning and distributed optimization literature to illustrate the existence of a solution to problem (\ref{eq:global_from_local}) and convergence properties of the proposed algorithms (see  \cite{xi2018linear,xin2020general,mai2016distributed,li2018distributed,nesterov2003introductory,sery2021over,nedic2020distributed}, and the references therein). 

\begin{assum}
\label{ass:step_size_TV}
For all $k\geq 0$, the step size in Algorithm 1 is chosen as $\eta(k)=\frac{\eta_c}{\sqrt{k+1}}$, where $0< \eta_c \leq \frac{1}{L}$. 
\end{assum}

\begin{remark}
\label{rem:step_size_TV}
Under Assumption \ref{ass:step_size_TV}, the step size $\eta(k)$ is  decreasing and satisfies $\sum_{k=0}^{\infty}\eta (k)=\infty$ and $\lim_{k\to\infty}\eta(k) = 0$.
\end{remark}

Conditions similar to Assumption~\ref{ass:step_size_TV} on the step size 
have also been utilized in many previous studies (e.g., \cite{nesterov2003introductory,nedic2014distributed,nedic2020distributed,yang2019survey,wang2017distributed,li2018distributed,xie2018distributed,mai2019distributed}, and the references therein) to show the exact convergence to an optimal solution.

\begin{lemma}
\label{lem:lem_11}
If $f(\cdot)$  is continuously differentiable, convex with $L$-Lipschitz continuous gradients, then for any  $ \theta_1, \theta_2 \in \Re^{m}$, the following inequality holds:
\begin{align*}
0\leq f(\theta_2)-f(\theta_1)- \nabla f(\theta_1)^{T}(\theta_2-\theta_1) \leq \frac{L}{2}||\theta_1-\theta_2||^2.  
\end{align*}
\begin{proof}
The result is a direct consequence of {\cite[Theorem~2.1.5]{nesterov2003introductory}}.
\end{proof}
\end{lemma}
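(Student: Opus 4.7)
The plan is to split the two-sided inequality into its lower and upper bound and handle them separately, since each uses a different hypothesis on $f$.

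For the lower bound $0 \leq f(\theta_2)-f(\theta_1)-\nabla f(\theta_1)^{T}(\theta_2-\theta_1)$, I would invoke only convexity. This is the standard first-order characterization of convex differentiable functions: the graph of $f$ lies above each of its tangent hyperplanes. A quick derivation starts from the defining inequality $f(\lambda\theta_2+(1-\lambda)\theta_1)\leq \lambda f(\theta_2)+(1-\lambda)f(\theta_1)$, rearranges to $(f(\theta_1+\lambda(\theta_2-\theta_1))-f(\theta_1))/\lambda \leq f(\theta_2)-f(\theta_1)$, and lets $\lambda\downarrow 0$ so that the left-hand side tends to the directional derivative $\nabla f(\theta_1)^{T}(\theta_2-\theta_1)$.

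For the upper bound, Lipschitz continuity of the gradient is the key. I would use the fundamental theorem of calculus along the segment connecting $\theta_1$ and $\theta_2$, writing
\begin{equation*}
f(\theta_2)-f(\theta_1)=\int_{0}^{1}\nabla f\!\left(\theta_1+t(\theta_2-\theta_1)\right)^{T}(\theta_2-\theta_1)\,dt.
\end{equation*}
Subtracting $\nabla f(\theta_1)^{T}(\theta_2-\theta_1)=\int_{0}^{1}\nabla f(\theta_1)^{T}(\theta_2-\theta_1)\,dt$ gives
\begin{equation*}
f(\theta_2)-f(\theta_1)-\nabla f(\theta_1)^{T}(\theta_2-\theta_1)=\int_{0}^{1}\bigl[\nabla f(\theta_1+t(\theta_2-\theta_1))-\nabla f(\theta_1)\bigr]^{T}(\theta_2-\theta_1)\,dt.
\end{equation*}
Applying the Cauchy--Schwarz inequality inside the integral and then the $L$-Lipschitz bound $\|\nabla f(\theta_1+t(\theta_2-\theta_1))-\nabla f(\theta_1)\|\leq L t\|\theta_2-\theta_1\|$ reduces the right-hand side to $\int_{0}^{1}L t\|\theta_2-\theta_1\|^{2}\,dt=\tfrac{L}{2}\|\theta_2-\theta_1\|^{2}$, which is the claimed upper bound.

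There is no real obstacle here: both inequalities are textbook consequences of the two hypotheses, and in fact the authors already point to Nesterov's Theorem~2.1.5 as the underlying reference. The only minor care point is the direction of the Cauchy--Schwarz step — one must be sure to bound the inner product by the product of norms rather than using a signed estimate — but this is routine. The symmetric version $\|\theta_1-\theta_2\|^{2}=\|\theta_2-\theta_1\|^{2}$ makes the statement match the form given in the lemma.
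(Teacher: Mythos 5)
Your proof is correct: the lower bound is the standard first-order characterization of convexity, and the upper bound follows from the fundamental theorem of calculus along the segment, Cauchy--Schwarz, and the Lipschitz property, exactly as in the argument underlying \cite[Theorem~2.1.5]{nesterov2003introductory}. The paper simply cites that theorem, so you have written out in full the same standard derivation the authors invoke by reference.
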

\begin{remark}
	\label{rem:L}
 	From Lemma \ref{lem:lem_11} and the definition of strong convexity (\ref{eq:str_conv}), we have $\mu \leq L$ if a function is continuously  differentiable, strongly convex with $L$-Lipschitz continuous gradients. Hence,  $\eta_c \leq \frac{1}{L} \leq \frac{1}{\mu}$ holds by Assumption \ref{ass:step_size_TV}. 	
\end{remark}

\begin{assum}
	\label{ass:alphas}
	The unknown time-varying positive real channel coefficients are assumed to be independent realizations (across time and agents) of the same probability distribution, i.e., $\forall k \in \mathbb{N}_0$, $\alpha_{i}(k) \sim \mathcal{D}\big(\bar{\alpha},Var(\alpha)\big)$, where $\bar{\alpha}$ and $Var(\alpha)$ respectively denote the mean and variance of the distribution $\mathcal{D}$. 
\end{assum}

As in \cite{molinari2021max,molinari2022over}, we refer here to {\cite[Ch~2.3, Ch~2.4]{Tse2012}} and {\cite[Ch~5.4]{molisch2012wireless}}, thus considering channel coefficients independent realizations of the same probability distribution (see \cite{sklar1997rayleigh}).

\begin{lemma}
	\label{lem:h_i_E}
	Under Assumption \ref{ass:alphas}, $\mathbb{E}[h_{i}(k)] = \frac{1}{N}$ holds for all $i=1,2,\ldots, N$ and $k\geq 0$.
\end{lemma}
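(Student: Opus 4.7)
The plan is to exploit the symmetry induced by the i.i.d.\ assumption together with the fact that by construction $\sum_{i=1}^N h_i(k) = 1$ (equation~(\ref{eq:eq_asmp_h})). The proof should be very short and rests on two ingredients: linearity of expectation and exchangeability of the channel coefficients.

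First I would note that, because each $\alpha_i(k) > 0$, the random variable $h_i(k) = \alpha_i(k)/\sum_{j=1}^N \alpha_j(k)$ takes values in $(0,1]$, so its expectation is well defined and finite. Applying linearity of expectation to (\ref{eq:eq_asmp_h}) yields
\begin{equation*}
\sum_{i=1}^N \mathbb{E}[h_i(k)] = \mathbb{E}\Bigl[\sum_{i=1}^N h_i(k)\Bigr] = 1.
\end{equation*}

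Next I would invoke the symmetry argument: since the coefficients $\alpha_1(k),\ldots,\alpha_N(k)$ are i.i.d.\ under Assumption~\ref{ass:alphas}, the joint distribution of $(\alpha_1(k),\ldots,\alpha_N(k))$ is invariant under any permutation of the indices. In particular, for any $i, j \in \{1,\ldots,N\}$, the pair $(\alpha_i(k), \sum_{\ell=1}^N \alpha_\ell(k))$ has the same distribution as $(\alpha_j(k), \sum_{\ell=1}^N \alpha_\ell(k))$. Consequently $h_i(k)$ and $h_j(k)$ are identically distributed, so $\mathbb{E}[h_i(k)] = \mathbb{E}[h_j(k)]$ for all $i,j$. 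Combining this with the previous display gives $N \cdot \mathbb{E}[h_i(k)] = 1$, hence $\mathbb{E}[h_i(k)] = 1/N$.

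I do not foresee any real obstacle here; the statement is essentially a one-line consequence of exchangeability, and the only thing to be careful about is justifying that the expectations exist (handled by the boundedness $h_i(k) \in (0,1]$) before pulling the sum out of the expectation. No appeal to the specific distribution $\mathcal{D}$ beyond i.i.d.\ positivity is needed — in particular, the mean $\bar{\alpha}$ and variance $\mathrm{Var}(\alpha)$ never enter the argument.
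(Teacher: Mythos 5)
Your proof is correct and follows essentially the same route as the paper's: linearity of expectation applied to $\sum_{i=1}^N h_i(k)=1$ combined with the observation that the $h_i(k)$ are identically distributed, giving $N\,\mathbb{E}[h_i(k)]=1$. Your version is marginally more careful in two places — you justify existence of the expectations via $h_i(k)\in(0,1]$ and you invoke full exchangeability of the joint distribution rather than just identical marginals of the $\alpha_i(k)$ (which is the step that actually requires the independence in Assumption~\ref{ass:alphas}) — but the argument is the same.
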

\begin{proof}
	From (\ref{eq:eq_asmp_h}), we have  $h_{i}(k)=\frac{\alpha_{i}(k)}{ \sum_{i=1}^{N}\alpha_{i}(k)}$ and $\sum_{i=1}^{N}h_{i}(k)=1$. Since each $\alpha_{i}(k)$ is sampled from the same distribution as stated in Assumption \ref{ass:alphas}, the underlying distribution of  each $h_{i}(k)$ is the same, i.e., $\mathbb{E}[h_i(k)] = \mathbb{E}[h_{j}(k)]$ holds for all $i,j \in \{1,2,\cdots, N\}$ and $\forall k\geq 0$. Moreover, due to the linearity of expectation, we have $$1=\mathbb{E}\Big[\sum_{i=1}^{N}h_{i}(k)\Big] = \sum_{i=1}^{N}\mathbb{E}[h_{i}(k)]=N\mathbb{E}[h_{i}(k)],$$ which implies  $\mathbb{E}[h_{i}(k)]=\frac{1}{N}$ for all $i=1,2,\ldots, N$. 
\end{proof}

\vspace{2mm}
\subsection{Convergence Analysis of the FedCOTA Algorithm}
\vspace{2mm}

First, we present preparatory results needed to show the convergence of the FedCOTA algorithm. Consider
\begin{align}
	\label{eq:C_k}
	C(k)=C_k =1-\eta(k)\mu.
\end{align}

Note that, under Assumption \ref{ass:step_size_TV}, $C_k \geq 0$ holds $\forall k \in \mathbb{N}_0 $. As $\eta (k)$ is decreasing, $C_k$ is increasing in $k$, hence it suffices to show $C_0 \geq 0$. This immediately follows from $C_0 =1-\eta_c\mu$ and $\eta_c \leq \frac{1}{\mu}$ (see Remark \ref{rem:L}), hence $C_0 \geq 0$.
Note furthermore that, as $C_k$ is increasing, $C_k >0$ holds for all $k \geq 1$.

\vspace{1mm}
\begin{lemma}
	\label{lem:C_limit_1}
	Under Assumption \ref{ass:step_size_TV}, 	$\lim_{k\to\infty} (C_{k})^{k} = 0$.\\[-3mm]
\end{lemma}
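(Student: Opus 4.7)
The plan is to turn the limit into an exponential upper bound and then use the fact that $\eta(k) = \eta_c/\sqrt{k+1}$ decays more slowly than $1/k$. Concretely, substitute the prescribed step size to obtain
\[
C_k \;=\; 1 - \eta(k)\mu \;=\; 1 - \frac{\eta_c\mu}{\sqrt{k+1}}.
\]
As already observed in the paragraph preceding the lemma, Assumption~\ref{ass:step_size_TV} together with Remark~\ref{rem:L} guarantee that $C_k \in [0,1]$ for all $k\in\mathbb{N}_0$, so raising $C_k$ to a positive integer power is well-defined and nonnegative.

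Next, I would apply the elementary inequality $1-x \leq e^{-x}$, valid for all $x\in\mathbb{R}$. Since $\eta(k)\mu \geq 0$, this gives
\[
0 \;\leq\; (C_k)^{k} \;=\; \left(1 - \frac{\eta_c\mu}{\sqrt{k+1}}\right)^{k} \;\leq\; \exp\!\left(-\,\frac{k\,\eta_c\mu}{\sqrt{k+1}}\right).
\]
Because $\eta_c\mu > 0$ and
\[
\frac{k}{\sqrt{k+1}} \;\to\; \infty \quad \text{as } k\to\infty,
\]
the right-hand side tends to $0$. A squeeze-theorem argument then yields $\lim_{k\to\infty}(C_k)^k = 0$.

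No real obstacle is expected; the only subtlety is ensuring that $C_k$ lies in $[0,1]$ so that both the bound $1-x \leq e^{-x}$ and the monotonicity of the exponential are being used in the correct regime. This is already justified via $\eta_c \leq 1/\mu$ from Remark~\ref{rem:L}, so the proof reduces to the two displayed inequalities above followed by the limit $k/\sqrt{k+1} \to \infty$.
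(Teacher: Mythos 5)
Your argument is correct and follows essentially the same route as the paper: both set $Q=\eta_c\mu$, apply the inequality $1-x\leq e^{-x}$ to obtain $0\leq (C_k)^k\leq \exp\bigl(-Qk/\sqrt{k+1}\bigr)$, and conclude by noting that $k/\sqrt{k+1}\to\infty$ forces the bound to vanish. Your explicit attention to $C_k\in[0,1]$ (via Remark~\ref{rem:L}) matches the paper's preparatory remark, so nothing is missing.
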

\begin{proof}
	By letting $C_{k} = 1-\frac{Q}{\sqrt{k+1}}$, where $Q = \eta_c\mu>0$, we write \\[-3mm] 
	\begin{align}
		\label{eq:exp_inf_1}
		%\nonumber
		\lim_{k\to\infty} (C_{k})^{k} &= \lim_{k\to\infty} \Big(1-\frac{Q}{\sqrt{k+1}}\Big)^{k}. 
		\\[-2mm]  \nonumber
	\end{align}
	
	Note that $\forall x\in \Re$, we have $1+x\leq e^{x}$. Hence, $\forall k \in  \mathbb{N}_0$ \\[-2mm]
	\begin{align}
		\label{eq:exp_inf_1_J1}
		C_k = 1-\frac{Q}{\sqrt{k+1}}\leq e^{-\frac{Q}{\sqrt{k+1}}}.
	\end{align}
	
	Moreover, as noted above, $C_k \geq 0$, hence 
	\\[-2mm]
	\begin{align}
		\label{eq:exp_inf_1_J2}
		0\leq (C_k)^{k} \leq e^{-\frac{Qk}{\sqrt{k+1}}}.
	\end{align}
	
	As the term on the right hand side of (\ref{eq:exp_inf_1_J2}) goes to zero for $k\to\infty$, we have established that $\lim_{k\to\infty} (C_{k})^{k}= 0$.
\end{proof}

\vspace{1mm}
\begin{lemma}
	\label{lem:last_term_pre}
	Under Assumption \ref{ass:step_size_TV}, 	$\mathlarger{\prod}_{k=0}^{\infty}C_k = 0$.
\end{lemma}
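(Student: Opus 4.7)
The plan is to reduce the convergence of the infinite product to the divergence of the partial sums $\sum_k \eta(k)$, which follows directly from Remark~\ref{rem:step_size_TV}. The standard fact to invoke is that for nonnegative $a_k < 1$, one has $\prod_{k=0}^{\infty}(1-a_k) = 0$ if and only if $\sum_{k=0}^{\infty} a_k = \infty$. Here $a_k = \eta(k)\mu$, so the hypothesis $\sum_k \eta(k) = \infty$ together with $\mu > 0$ immediately gives the result.

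More concretely, I would argue as follows. First, note that by Assumption~\ref{ass:step_size_TV} and Remark~\ref{rem:L}, we have $\eta(k)\mu \le \eta_c \mu \le 1$, so $C_k \in [0,1]$; and from the discussion preceding Lemma~\ref{lem:C_limit_1}, $C_k > 0$ for all $k \ge 1$. If $C_0 = 0$ (the edge case $\eta_c\mu = 1$), the product is trivially zero, so I may assume $C_k > 0$ for all $k$, which permits taking logarithms. Using the elementary inequality $\log(1-x) \le -x$ for $x \in [0,1)$, I would write
\begin{align*}
\log \prod_{k=0}^{K} C_k = \sum_{k=0}^{K} \log\bigl(1 - \eta(k)\mu\bigr) \le -\mu \sum_{k=0}^{K} \eta(k).
\end{align*}
Letting $K \to \infty$ and invoking $\sum_{k=0}^{\infty} \eta(k) = \infty$ from Remark~\ref{rem:step_size_TV}, the right-hand side tends to $-\infty$, hence $\log \prod_{k=0}^{\infty} C_k = -\infty$, which is equivalent to $\prod_{k=0}^{\infty} C_k = 0$.

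I do not anticipate a genuine obstacle here; this is essentially a textbook manipulation. The only subtleties are (i) ensuring $C_k > 0$ so that the logarithm is defined (handled via Remark~\ref{rem:L} and a separate trivial case), and (ii) citing the right divergence property of $\eta(k) = \eta_c/\sqrt{k+1}$, which is the $p$-series with $p = 1/2$ and is already recorded in Remark~\ref{rem:step_size_TV}. An alternative, even shorter route would mirror the proof of Lemma~\ref{lem:C_limit_1}: bound $C_k \le e^{-\eta(k)\mu}$ and multiply to get $\prod_{k=0}^{K} C_k \le \exp\bigl(-\mu \sum_{k=0}^{K}\eta(k)\bigr) \to 0$, which avoids any case distinction about whether some $C_k$ vanishes.
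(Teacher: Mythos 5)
Your proof is correct and is essentially the paper's own argument: the paper bounds $C_t = 1-\eta(t)\mu \le e^{-\eta(t)\mu}$, multiplies to get $\prod_{t=0}^{k}C_t \le \exp\bigl(-\mu\sum_{t=0}^{k}\eta(t)\bigr)$, and lets $k\to\infty$ using $\sum_t \eta(t)=\infty$ — exactly the "alternative, even shorter route" you mention at the end, and your logarithmic version is just the same inequality written additively. No gap; the case distinction for $C_0=0$ is a harmless extra precaution that the exponential form renders unnecessary.
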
	\vspace{1mm}
\begin{proof}
	By using the relation $1+x\leq e^{x}$, $\forall x\in \Re$, we write  
	\begin{align}
		\label{eq:product_C_1}
		\nonumber
		\mathlarger{\prod}_{t=0}^{k}C_t &=	\mathlarger{\prod}_{t=0}^{k}\big(1-\eta(t)\mu \big)
		\\
		&\leq \mathlarger{\prod}_{t=0}^{k} \mathrm{e}^{-\eta(t)\mu}  = \mathrm{e}^{-\mathlarger{\sum}_{t=0}^{k}\eta(t)\mu} 
	\end{align}

	Taking the limit as $k\to\infty$ on both sides gives 
	\\[-3mm]
	\begin{align}
		\label{eq:product_C_2}
		\lim_{k\to\infty}\mathlarger{\prod}_{t=0}^{k}C_t \leq  \mathrm{e}^{-\mu \mathlarger{\sum}_{t=0}^{\infty}\eta (t) } = 0
	\end{align}
	\noindent
	since by Assumption \ref{ass:step_size_TV}, $\mu \mathlarger{\sum}_{t=0}^{\infty}\eta (t)=\infty$.
\end{proof}
\vspace{2mm}

\begin{lemma}
	\label{lem:last_term_aux_2_try}
 	Under Assumption \ref{ass:step_size_TV}, for $k\to\infty$,\\
	$$\mathlarger{\sum}_{t=0}^{k-2}\Big( \mathlarger{\prod}_{l=t+1}^{k-1}  C_l \Big)\eta^{2}(t)+\eta^{2}(k-1)$$\\[-1mm] converges to an arbitrarily small positive value $\epsilon$.
\end{lemma}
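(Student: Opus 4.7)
The plan is to first observe that the target sum satisfies a simple first-order linear recurrence in $k$, and then to apply a standard dissipation argument driven by the step-size conditions in Remark \ref{rem:step_size_TV} to show the sum tends to zero.

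Denoting the expression by $S_k$, I would rewrite $S_{k+1}$, pull out the common factor $C_k$ from each product $\prod_{l=t+1}^{k}C_l$, and recognize the boundary term $C_k\eta^{2}(k-1)$ as exactly what is needed so that everything collapses into $C_k S_k$; this yields
\begin{align*}
S_{k+1} \;=\; C_k S_k + \eta^{2}(k) \;=\; \big(1-\mu\eta(k)\big)S_k + \eta^{2}(k), \qquad k\ge 1,
\end{align*}
with $S_1=\eta_c^{2}$ under the empty-sum convention. Non-negativity $S_k\ge 0$ comes for free from $C_l\ge 0$ (noted just before Lemma \ref{lem:C_limit_1}) and $\eta^{2}(t)\ge 0$.

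Next, to show $S_k\to 0$, I would fix an arbitrary $\epsilon>0$ and choose $K_\epsilon$ so that $\eta(k)\le \mu\epsilon/2$ for all $k\ge K_\epsilon$, which is possible since $\eta(k)\to 0$ (Remark \ref{rem:step_size_TV}). A two-case analysis then drives the argument: if $S_k\le\epsilon$ already, the recurrence together with $\eta(k)\le\mu\epsilon$ gives $S_{k+1}\le \epsilon - \eta(k)(\mu\epsilon-\eta(k))\le \epsilon$, so the invariant is preserved; if instead $S_k>\epsilon$, the bound $\eta^{2}(k)\le(\mu\epsilon/2)\eta(k)$ plugged into the recurrence extracts a strict decrease $S_{k+1}\le S_k - \mu\eta(k)\,\epsilon/2$. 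Summing the decrements along any window where $S_k$ would stay above $\epsilon$ gives $S_k\le S_{K_\epsilon} - (\mu\epsilon/2)\sum_{j=K_\epsilon}^{k-1}\eta(j)$, which must eventually become negative because $\sum\eta(j)=\infty$ (Remark \ref{rem:step_size_TV}), contradicting $S_k\ge 0$. Hence $S_k$ must enter $[0,\epsilon]$ at some step and then, by the invariant, remain there, so $\limsup_{k\to\infty}S_k\le\epsilon$; since $\epsilon$ was arbitrary, $S_k\to 0$, which is the claim.

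The main technical point to watch is the joint calibration of the two cases: Case A (preservation of the invariant) only needs $\eta(k)\le \mu\epsilon$, while Case B (net dissipation larger than the additive noise $\eta^{2}(k)$) needs the stricter $\eta(k)\le \mu\epsilon/2$, so one must pick $K_\epsilon$ to enforce the stricter bound for both to work simultaneously. Beyond this, no deeper obstacle arises: both $\eta(k)\to 0$ and $\sum_{k}\eta(k)=\infty$ are already supplied by Remark \ref{rem:step_size_TV}, and the recurrence is elementary.
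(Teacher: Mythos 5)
Your proof is correct, and it takes a genuinely different route from the one in the paper. You recognize that the displayed quantity $S_k$ is exactly the solution of the first-order recursion $S_{k+1}=C_kS_k+\eta^{2}(k)$ with $S_1=\eta^{2}(0)$ --- in fact the very recursion (\ref{eq:try_eq_5}) from which this sum is generated in the proof of Theorem \ref{thm:thm_main_try} --- and then run a standard dissipation argument: once $\eta(k)\le\mu\epsilon/2$, the interval $[0,\epsilon]$ is forward-invariant under the recursion, and as long as $S_k>\epsilon$ the strict decrease $S_{k+1}\le S_k-\tfrac{\mu\epsilon}{2}\eta(k)$ combined with $\sum_k\eta(k)=\infty$ and $S_k\ge 0$ forces entry into $[0,\epsilon]$ in finite time, whence $\limsup_k S_k\le\epsilon$ for every $\epsilon>0$. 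The paper instead estimates the sum directly: using that $C_l$ is increasing, it majorizes each product by a power of $C_{k-1}$, obtaining $\sum_{t=0}^{k-1}(C_{k-1})^{k-t-1}\eta^{2}(t)$, splits this sum at an index $k_0$ beyond which $\eta(t)\le\epsilon$, sends the head to zero via Lemma \ref{lem:C_limit_1} and bounds the tail by a geometric series, arriving at the bound $\epsilon/\mu$. Your calibration of the two cases is consistent (enforcing $\eta(k)\le\mu\epsilon/2$ suffices for both the invariance step and the strict-decrease step), and your argument is somewhat more economical and more general: it dispenses with Lemma \ref{lem:C_limit_1} and with the monotonicity of $\eta(k)$, requiring only $\eta(k)\to 0$, $\sum_k\eta(k)=\infty$ and $C_k\ge 0$, so it would extend verbatim to non-monotone step-size schedules; both arguments reach the same conclusion that the quantity in question tends to zero.
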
	
\begin{proof}
	See Appendix.
\end{proof}

We are now ready to present the main result.

\begin{thm}
	\label{thm:thm_main_try}
	Let Assumptions \ref{ass:conset}, \ref{ass:f_i}, \ref{ass:step_size_TV}, and  \ref{ass:alphas} hold. Then,  $\exists\theta^{*} \in \Theta^* $ such that 
	for $k\to\infty$, $\mathbb{E}\big[||\theta(k)-\theta^{*}||^{2}\big]$ 
	is arbitrarily small.
\end{thm}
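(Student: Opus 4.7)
The strategy is to reduce the FedCOTA update to a noisy projected gradient descent step, derive a one-step contraction on $\mathbb{E}\bigl[\|\theta(k)-\theta^*\|^2\bigr]$, unroll it, and invoke Lemmas~\ref{lem:last_term_pre} and \ref{lem:last_term_aux_2_try} to drive the residual to zero. Existence of $\theta^*$ follows from Assumption~\ref{ass:conset} (compactness of $\Theta$) and continuity together with strong convexity of $\mathcal{L}$, which in fact makes $\theta^*$ unique.

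\textbf{Step 1 (rewrite the update).} Since $\sum_i h_i(k)=1$, combining (\ref{eq:developFEDOTA}) and (\ref{eq:local_agent_update}) gives
\begin{equation*}
\theta(k+1)=\mathbf{P}_\Theta\!\left(\theta(k)-\eta(k)\,g(k)\right),\qquad g(k):=\sum_{i=1}^{N} h_i(k)\nabla f_i(\theta(k)).
\end{equation*}
Let $\mathcal{F}_k$ be the $\sigma$-algebra generated by $\{h_i(t):\,i\leq N,\,t<k\}$, so that $\theta(k)$ is $\mathcal{F}_k$-measurable while by Assumption~\ref{ass:alphas} the vector $(h_1(k),\ldots,h_N(k))$ is independent of $\mathcal{F}_k$. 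Lemma~\ref{lem:h_i_E} then yields $\mathbb{E}[g(k)\mid\mathcal{F}_k]=\frac{1}{N}\sum_i\nabla f_i(\theta(k))=\nabla\mathcal{L}(\theta(k))$, i.e.\ the effective gradient is conditionally unbiased.

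\textbf{Step 2 (one-step contraction).} Using $\theta^*=\mathbf{P}_\Theta(\theta^*)$ and the nonexpansiveness of $\mathbf{P}_\Theta$, expand
\begin{equation*}
\|\theta(k+1)-\theta^*\|^2\leq \|\theta(k)-\theta^*\|^2-2\eta(k)\,g(k)^T(\theta(k)-\theta^*)+\eta(k)^2\|g(k)\|^2.
\end{equation*}
Taking conditional expectation, applying Step~1 to the cross term, and invoking strong convexity of $\mathcal{L}$ (Remark~\ref{ass:L_i}) in the form $\nabla\mathcal{L}(\theta(k))^T(\theta(k)-\theta^*)\geq \frac{\mu}{2}\|\theta(k)-\theta^*\|^2$, one obtains
\begin{equation*}
\mathbb{E}\bigl[\|\theta(k+1)-\theta^*\|^2\mid\mathcal{F}_k\bigr]\leq C_k\,\|\theta(k)-\theta^*\|^2+\eta(k)^2\,\mathbb{E}\bigl[\|g(k)\|^2\mid\mathcal{F}_k\bigr],
\end{equation*}
with $C_k$ as in (\ref{eq:C_k}). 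By Remark~\ref{ass:L_i_2} the local gradients are uniformly bounded on $\Theta$, so $\|g(k)\|\leq G$ for a constant $G$ (since $h_i(k)\geq 0$, $\sum_i h_i(k)=1$, so $g(k)$ is a convex combination of bounded gradients). Taking total expectation gives the scalar recursion
\begin{equation*}
a_{k+1}\leq C_k\,a_k+G^2\eta(k)^2,\qquad a_k:=\mathbb{E}\bigl[\|\theta(k)-\theta^*\|^2\bigr].
\end{equation*}

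\textbf{Step 3 (unroll and pass to the limit).} Iterating the recursion from $0$ to $k-1$ gives
\begin{equation*}
a_k\leq \Bigl(\prod_{t=0}^{k-1}C_t\Bigr)\,a_0+G^2\!\left[\sum_{t=0}^{k-2}\Bigl(\prod_{l=t+1}^{k-1}C_l\Bigr)\eta(t)^2+\eta(k-1)^2\right].
\end{equation*}
Compactness of $\Theta$ bounds $a_0$; Lemma~\ref{lem:last_term_pre} sends the first product to zero; Lemma~\ref{lem:last_term_aux_2_try} shows that the bracketed sum can be made arbitrarily small for $k$ large. Together these imply that $a_k=\mathbb{E}\bigl[\|\theta(k)-\theta^*\|^2\bigr]$ is arbitrarily small in the limit, which is the claim.

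\textbf{Expected main obstacle.} The delicate point is the conditional-expectation computation in Step~1: one must argue that the randomness of $h_i(k)$ is independent of the history encoded in $\theta(k)$, which rests squarely on Assumption~\ref{ass:alphas}. Once that independence is clear, Lemma~\ref{lem:h_i_E} converts the weighted stochastic gradient into an unbiased estimator of $\nabla\mathcal{L}$ and the remainder is a standard, albeit careful, strongly-convex SGD-style argument whose tail behavior is exactly what Lemmas~\ref{lem:last_term_pre} and \ref{lem:last_term_aux_2_try} were designed to handle.
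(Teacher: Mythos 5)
Your proof is correct and follows essentially the same route as the paper's: nonexpansiveness of the projection, conditional unbiasedness of the channel-weighted gradient via Lemma~\ref{lem:h_i_E}, strong convexity of $\mathcal{L}$ to produce the contraction factor $C_k$, and unrolling the recursion so that Lemmas~\ref{lem:last_term_pre} and \ref{lem:last_term_aux_2_try} finish the argument. The only (cosmetic) difference is that you formalize the independence of $h_i(k)$ from $\theta(k)$ through a filtration and conditional expectations, whereas the paper states the same fact informally and works with total expectations throughout.
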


\begin{proof}
	For any $\theta^{*}\in \Theta^{*}$, by using (\ref{eq:local_agent_update}), (\ref{eq:developFEDOTA}), (\ref{eq:eq_asmp_h}), and the non-expansive\footnote{$||\mathbf{P}_\Theta(x)-\mathbf{P}_\Theta(y) ||\leq ||x-y||$ holds for all $x,y \in \Re^m$ if $\Theta$ is a nonempty closed convex set  (see \cite{bertsekas2003convex}).} 
	property of the projection $\mathbf{P}_\Theta$,  we write
	%and the fact that $\theta^{*} = \mathbf{P}_\Theta\big(\theta^{*}\big)$, we write
	\\[-1mm]
	\begin{align}
		\label{eq:try_eq_1}
		\nonumber
		||\theta(k+1)-\theta^{*}||^2 &=   \bigg|\bigg|\mathbf{P}_\Theta\Bigg(\sum_{i=1}^{N}h_i(k) \theta_i(k) \Bigg) -\mathbf{P}_\Theta\big(\theta^{*}\big)\bigg|\bigg|^2 \\ \nonumber 
		&\leq \bigg|\bigg|  \sum_{i=1}^{N}h_i(k) \bigg( \theta(k) - \eta(k)\nabla f_i\big(\theta(k)\big)\bigg)-\theta^{*} \bigg|\bigg|^2 \\  \nonumber
		%&=   \big| \big| \theta(k) - \theta^{*}  - \eta(k)  \sum_{i=1}^{N}h_i(k) \nabla f_i\big(\theta(k)\big) \bigg|\bigg|^2 \\ \nonumber
		&=\big| \big|  \theta(k)- \theta^{*} \big| \big|^{2}
		\\ \nonumber 
		&\qquad -  2\eta(k)  \sum_{i=1}^{N}h_i(k) \nabla f^T_i\big(\theta(k)\big)\big(\theta(k)- \theta^{*}\big) \\  
		& \qquad \qquad \qquad +  \Big| \Big|  \eta(k) \sum_{i=1}^{N}h_i(k)\nabla f_i\big(\theta(k)\big)\Big| \Big|^{2}. %\\[-1mm] \nonumber
	\end{align}
	
	Note that boundedness of the constraint set $\Theta$ and Lipschitz continuity of $\nabla f_i\big(\theta(k)\big)$ imply that  $\exists  D_\Theta,M>0$ such that \\[-1mm]
	\begin{align}
		\label{eq:bounds_norms_1_try}
		\big| \big|  \theta(k)- \theta^{*}\big| \big| &\leq   D_\Theta, \\ 
		\label{eq:bounds_norms_2_try}
		\big|\big| \nabla f_i\big(\theta(k)\big) \big|\big| &\leq M.  \\[-3mm] \nonumber
	\end{align}
	
	Then,  an  upper-bound for the last term on the right hand side of (\ref{eq:try_eq_1}) can be written as 
	
	\begin{align}
		\label{eq:try_eq_2}
		\nonumber
		\Big| \Big|  \eta(k)  \sum_{i=1}^{N}h_i(k)\nabla f_i\big(\theta(k)\big)\Big| \Big|^{2}
		&= \eta^{2}(k) \Big| \Big|    \sum_{i=1}^{N}h_i(k)\nabla f_i\big(\theta(k)\big)\Big| \Big|^{2}\\ \nonumber
		&\leq \eta^{2}(k)\sum_{i=1}^{N}h_i(k) \Big| \Big|    \nabla f_i\big(\theta(k)\big)\Big| \Big|^{2}\\
		&\leq  \eta^{2}(k) M^{2},\\[-2mm] \nonumber
	\end{align} 
	\noindent
	which follows from (\ref{eq:eq_asmp_h}), (\ref{eq:bounds_norms_2_try}), and the convexity of the function $|| \cdot ||^{2}$. Subsequently, taking the expectations of both sides of (\ref{eq:try_eq_1}) and using the linearity of expectation results in  
	
	\begin{align}
	\label{eq:try_eq_12}
	\nonumber
	\mathbb{E}\big[||\theta(k+1)-\theta^{*}||^2\big] &\leq\mathbb{E}\big[\big| \big|  \theta(k)- \theta^{*} \big| \big|^{2}\big]
	\\ \nonumber 
	&-2\eta(k)\mathbb{E}\big[  \sum_{i=1}^{N}h_i(k) \nabla f^T_i\big(\theta(k)\big)\big(\theta(k)- \theta^{*}\big)\big] \\  
	& \qquad \qquad \qquad +  \eta^{2}(k) M^{2}, \\[-3mm] \nonumber
	\end{align}
	\noindent
	where the second term on the right hand side can be written as
	
	\begin{align}
		\label{eq:try_eq_122}
		\nonumber
		 &-2\eta(k)\mathbb{E}\big[ \sum_{i=1}^{N}h_i(k) \nabla f^T_i\big(\theta(k)\big)\big(\theta(k)- \theta^{*}\big)\big] 
		\\ 
		&=  -2\eta(k)  \sum_{i=1}^{N}\mathbb{E}\big[h_i(k)\nabla f^T_i\big(\theta(k)\big)\big(\theta(k)- \theta^{*}\big)\big]. \\[-5mm] \nonumber
	\end{align}

	Note that the statistics of $h_i(k)$ ($i=1,2,\cdots, N$) are independent of $h_i(t)$ for $t<k$, which implies that  $\theta (k)$ and  $h_i(k)$ are statistically independent at time $k$ since  the statistics of $\theta (k)$ are dependent only of $h_i(t)$ for $t<k$ and $i=1,2,\cdots, N$. Hence, by using (\ref{eq:def_global}), Lemma \ref{lem:h_i_E}, and the linearity of the expectation, we can further write (\ref{eq:try_eq_122}) as 
	
	\begin{align}
		\label{eq:try_eq_123}
		\nonumber
		&-2\eta(k)  \sum_{i=1}^{N}\mathbb{E}\big[h_i(k)\nabla f^T_i\big(\theta(k)\big)\big(\theta(k)- \theta^{*}\big)\big] \\ \nonumber
		&=  -2\eta(k)  \sum_{i=1}^{N}\mathbb{E}\big[h_i(k)\big] \mathbb{E}\big[\nabla f^T_i\big(\theta(k)\big)\big(\theta(k)- \theta^{*}\big)\big] \\ \nonumber
		&=  -2\eta(k)  \sum_{i=1}^{N}\frac{1}{N} \mathbb{E}\big[\nabla f^T_i\big(\theta(k)\big)\big(\theta(k)- \theta^{*}\big)\big] \\  \nonumber
		&=  -2\eta(k)   \mathbb{E}\big[\sum_{i=1}^{N}\frac{1}{N}\nabla f^T_i\big(\theta(k)\big)\big(\theta(k)- \theta^{*}\big)\big] \\ 		
		& = -2\eta(k)  \mathbb{E}\big[\nabla \mathcal{L}^T\big(\theta(k)\big)\big(\theta(k)- \theta^{*}\big)\big]. 
	\end{align}
	
	Moreover, by Assumption \ref{ass:f_i} (strong convexity of $f_i(\theta)$ and $\mathcal{L}\big(\theta(k)\big)$), we have \\[-2mm]
	\begin{align}
	\label{eq:try_eq_3}
	\nonumber
	-2\eta(k) \nabla \mathcal{L}^{T}\big(\theta(k)\big)\big(\theta(k)-\theta^{*}\big) \leq -&2\eta(k) \big(\mathcal{L}\big(\theta(k)\big) - \mathcal{L}\big(\theta^{*}\big)\big)
	\\ 
	&\redoB{-}  \eta(k)\mu ||\theta(k)-\theta^{*})||^2. 
	\end{align}

	Note that $\mathcal{L}\big(\theta(k)\big) - \mathcal{L}\big(\theta^{*}\big)\geq 0$ holds for all $k\geq 0$ since $\theta^{*}\in\Theta^{*}$ (optimal point in the constraint set), which together with taking the expectations of both sides of (\ref{eq:try_eq_3}) results in \\[-2mm]
	\begin{align}
	\label{eq:try_eq_4}
	-2\eta(k) \mathbb{E}\big[\nabla \mathcal{L}^{T}\big(\theta(k)\big)\big(\theta(k)-\theta^{*}\big)\big] \leq -  \eta(k)\mu\mathbb{E}\big[||\theta(k)-\theta^{*})||^2\big]. 
	\end{align}

	By using (\ref{eq:try_eq_122})-(\ref{eq:try_eq_4}), (\ref{eq:try_eq_12}) can be rewritten as  
\begin{align}
	\label{eq:try_eq_5}
	\nonumber
	\mathbb{E}\big[||\theta(k+1)-\theta^{*}||^2\big] &\leq \Big(1-\eta(k) \mu\Big)\mathbb{E}\big[\big| \big|  \theta(k)- \theta^{*} \big| \big|^{2}\big] \\ \nonumber
	 & \qquad\qquad\qquad+  \eta^{2}(k) M^{2}  \\[2mm]  
	&= C_k\mathbb{E}\big[||\theta(k)-\theta^{*}||^2\big] +  \eta^{2}(k) M^{2}.
\end{align}
The recursive relation (\ref{eq:try_eq_5}) can be written as
\begin{align} 
	\label{eq:theta_k1_6_cor2_try}
	\nonumber
	\mathbb{E}\big[\big| \big| \theta(k)-\theta^{*} \big| \big|^{2}\big] 
	&\leq \bigg( \mathlarger{\prod}_{t=0}^{k-1} C_t \bigg) \mathbb{E}\big[\big| \big|  \theta(0)- \theta^{*} \big| \big|^{2}\big]
	\\
	&+ M^2\Bigg( \mathlarger{\sum}_{t=0}^{k-2}\Big( \mathlarger{\prod}_{l=t+1}^{k-1}  C_l \Big)\eta^{2}(t)+\eta^{2}(k-1)\Bigg).\\[-8mm] \nonumber
\end{align}
Note that by Lemma \ref{lem:last_term_pre},  $\mathlarger{\prod}_{t=0}^{\infty} C_t=0$ holds. Moreover,  by Lemma \ref{lem:last_term_aux_2_try}, Assumption \ref{ass:step_size_TV}, taking the limits of both sides of (\ref{eq:theta_k1_6_cor2_try}) completes the proof.\end{proof}

\vspace{2mm}
\section{Numerical Example}
\label{sec:sim}
\vspace{2mm}

We now apply the FedCOTA algorithm  to a federated logistic regression problem, where a system of agents, each with access to only its own local dataset, tries to accomplish  a global binary classification task.  Let the dataset available to the $i$-th agent be  $D_i = \{d^{n}_i \}_{n=1}^{ |D_i|}$, where $d^{n}_i = (u^{n}_i , z^{n}_i) \in \Re^{m}\times \{0, 1\}$,  and $u^{n}_i$ and $z^{n}_i$ respectively represent input data and labels available to $i$-th agent. Notice that all the agents have 2 different classes of data, labeled by 0 or 1, and their objective is to find a separating hyperplane in $\Re^{m}$ by using the existing data so that the agent is able to separate some unseen data from different classes. In order to accomplish this in a federated manner, each agent utilizes  cross-entropy as its local cost function, which can be expressed as

\begin{align}\label{eq:crossEnt}
	\nonumber
	f_i(\theta,d_i) = \lambda||\theta||_2^{2}-\frac{1}{|D_i|}\Bigg(&\sum_{n=1}^{|D_i|}  z^{n}_i \log\big(S(\theta^{T}u^{n}_i)\Big)
	\\[-1mm]   
	&+  (1-z^{n}_i) \log\big(1-S(\theta^{T}u^{n}_i)\Big)\Bigg) \\[-7mm] \nonumber
\end{align}
\noindent
where $\hat{z}^{n}_i = S(\theta^{T}u^{n}_i)$ is the local estimate of the label $z^{n}_i$ by the $i$-th agent, $S(x)=\frac{1}{1+e^{-x}}$ is the sigmoid function, $||\theta||_2$ is the $L_2$-norm of the parameter vector $\theta$, and $\lambda$ is the regularization hyper-parameter, where $\lambda=0$ means no regularization while  $\lambda=1$ represents maximum level of regularization. Note that $\lambda=0.0001$ has been chosen in simulations, and the overall cost function given in (\ref{eq:crossEnt}) is strongly convex. Moreover, since the parameter vector  $\theta$ is always in a closed and bounded (constraint) set $\Theta$, the second order derivative of (\ref{eq:crossEnt}) is bounded and therefore has a Lipschitz continuous gradient. Additionally, for each agent, the step size is identically chosen as $\eta (k)= \frac{1}{\sqrt{k+1}}$, and the overall cost function is then\\[-2mm]
\begin{align}\label{eq:crossEntGlobal}
	\mathcal{L}(\theta)=\frac{1}{N} \sum_{i=1}^{N} f_i(\theta,d_i)
\end{align}
\noindent
where $N$ is the number of agents in the system. We consider a system of $10$ agents, each having a total number of $|D_i|=100$ training samples. The parameter vector has dimension $m=3$, i.e.,  $\theta\in \Re^{3}$, which also includes a bias term. The objective is to find $\theta^{*} = \argmin_{\theta\in \Theta }  \mathcal{L}(\theta)$, where $\Theta = \{\theta\in \Re^{3}| \hspace{1mm} ||\theta(k) ||\leq 15, \forall k\geq 0\}$ is the constraint set.

\begin{figure}[tb]
	\advance\leftskip -2mm
	%\centering
	\includegraphics[scale=0.5855]{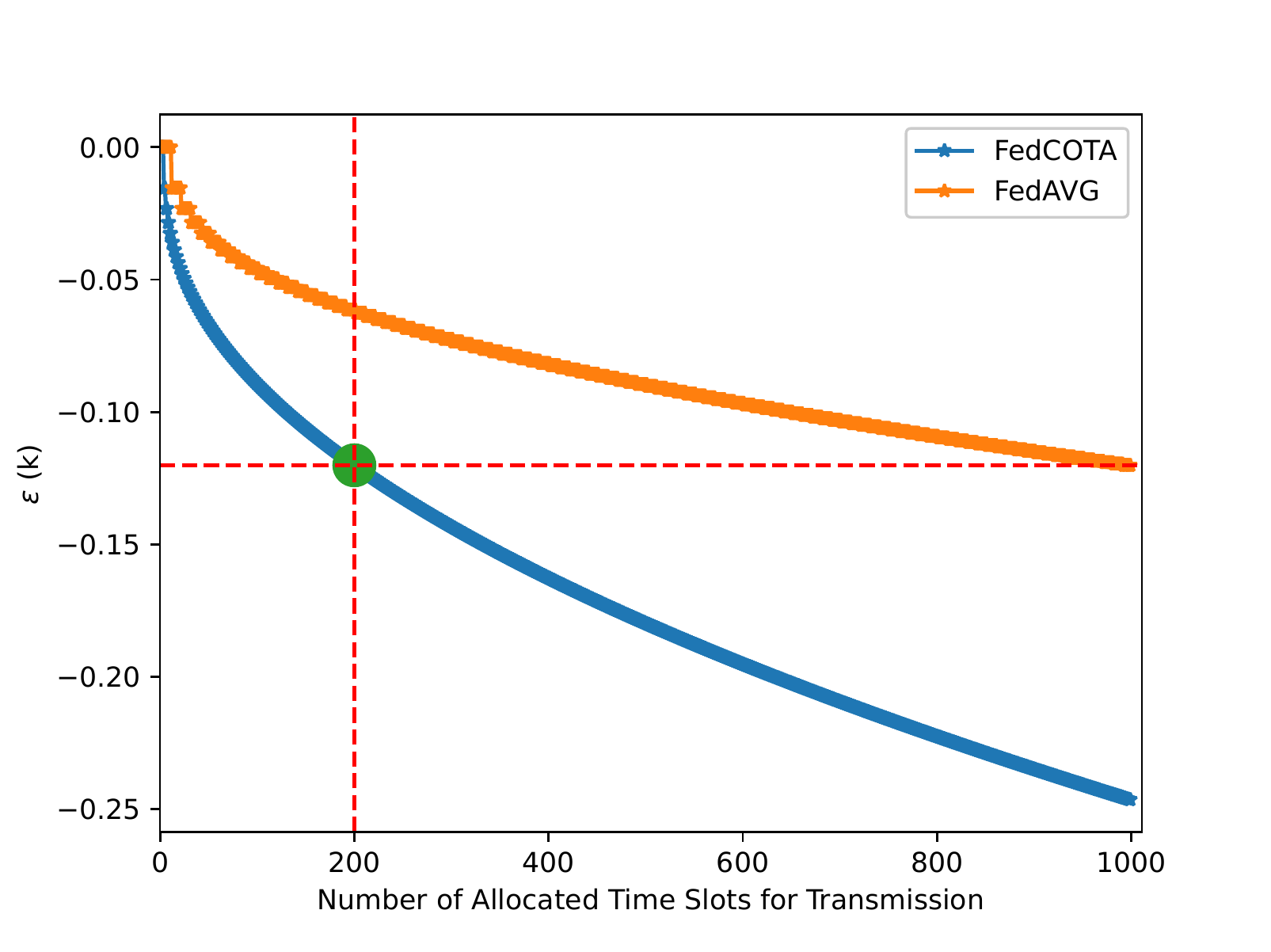}
	\caption{Comparison of a global logistic regression and federated logistic regression models. While the former is trained over the entire dataset, the latter is trained via FedCOTA and FedAVG.}
	\label{fig:param}
\end{figure}

In order to obtain a desired parameter vector $\theta_{d}$, we have trained a global logistic regression model, which has access to the entire dataset. We would expect that the parameter vector $\theta(k)$ tends to converge $\theta_{d}$. This can be assessed by computing $\epsilon(k)=\log_{10}\Big(\frac{\theta(k)-\theta_{d}}{\theta(0)-\theta_{d}}\Big)$ for each communication round $k\geq 0$. As it can be seen in  Fig.  \ref{fig:param}, the parameter vectors of agents utilizing the FedCOTA algorithm tend to converge to $\theta_{d}$.

Note that Fig.  \ref{fig:param} also includes a comparison of the FedCOTA algorithm and the standard FedAVG algorithm \cite{mcmahan2017communication}, for which the TDMA scheme is used for communication between agents and the central unit. In this case, individual time slots are allocated for each agent to transmit its parameter vector at each communication round. After receiving  parameter vectors, the server computes the average of them and sends it back to the agents. Since we consider a system with $N=10$ agents, it takes $10$ time slots per communication round for each agent to transmit its updated parameter vector while only $2$ are needed for the FedaOTA algorithm (see (\ref{eq:ser_rec_1}) and (\ref{eq:ser_rec_2})), which makes it 5 times faster than the FedAVG algorithm (see Fig.  \ref{fig:param}).

\vspace{2mm}
\section{Conclusion}
\label{sec:Conclusion}
\vspace{2mm}

In this paper we have investigated the federated learning problem via Over-the-Air Computation, which provides significant improvements in terms of communication efficiency and privacy. We have investigated the convergence properties of the proposed gradient-based  algorithm by considering time-varying step sizes. Subsequently, we have  presented some numerical examples to illustrate our theoretical results.    

Our current research is on the use of stochastic gradient descent, which is more efficient when we have large number of training samples. In addition, future work will also include cases where the communication between agents is fully distributed and data distribution among users are not independent and identically distributed (non-iid).

\vspace{2mm}
\section{Appendix: Proof of Lemma \ref{lem:last_term_aux_2_try}}
\vspace{2mm}

Due to Assumption \ref{ass:step_size_TV}, $C_k$ is increasing and nonnegative.	Then, we have \\[-2mm]
	\begin{align}
		\label{lem:last_term_aux_2_try_2}
		\mathlarger{\sum}_{t=0}^{k-2}\Big( \mathlarger{\prod}_{l=t+1}^{k-1}  C_l \Big)\eta^{2}(t)+\eta^{2}(k-1)
		\leq \mathlarger{\sum}_{t=0}^{k-1}\big(C_{k-1}\big)^{k-t-1}\eta^{2}(t).
	\end{align}	

	Moreover, again by Assumption \ref{ass:step_size_TV}, $\eta (k)$ is decreasing and $\lim_{k\to\infty}\eta (k)=0$. Thus, for an arbitrarily given $\epsilon > 0$, there exists a time step $k_0>0$ such that $\eta(k)\leq \epsilon$, $\forall k \geq k_0$. Since $\eta (k)$ is decreasing, we have
	
	\begin{align}
		\label{lem:aux_2_try}
		%1-C_{k-1} = \eta(k-1)\mu \geq \eta(k)\mu. 
		\eta(k)<\eta(k-1)= \frac{1-C_{k-1}}{\mu}.
	\end{align}	
	
	Multiplying both sides of (\ref{lem:aux_2_try}) with $\eta(k)$ provides for all $k\geq k_0$
	
	\begin{align}
		\label{eq:lr_extra}
		\eta^{2}(k)< \frac{(1-C_{k-1}) \epsilon}{\mu}.
	\end{align}

	For  $k> k_0+1$, the right hand side of (\ref{lem:last_term_aux_2_try_2}) can be  bounded as
	
	\begin{align}
		\label{eq:Lem_theta_k1}
		\nonumber 
		&\mathlarger{\sum}_{t=0}^{k-1} \big(C_{k-1}\big)^{k-t-1} \eta^{2}(t)\leq \mathlarger{\sum}_{t=0}^{k_0} \big(C_{k-1}\big)^{k-t-1} \eta^{2}(t) 
		\\ \nonumber
		& \hspace{3.5cm}+ 	\mathlarger{\sum}_{t=k_0+1}^{k-1} \big(C_{k-1}\big)^{k-t-1} \eta^{2}(t).\\[-4mm]
	\end{align}
	
	Decreasingness of $\eta(k)$ implies  $\eta^2(0)\geq \eta^2(k) $ for $k\geq 0$. From (\ref{eq:C_k}), $C_{k-1}\leq 1$ for all $k\geq 1$. Hence, the first term on the right hand side of (\ref{eq:Lem_theta_k1}) can be written as
	
	\begin{align}
		\label{eq:Lem_theta_k1_ext_1}
		\nonumber 
		\mathlarger{\sum}_{t=0}^{k_0} \big(C_{k-1}\big)^{k-t-1} \eta^{2}(t) &\leq \eta^2(0) \mathlarger{\sum}_{t=0}^{k_0} \big(C_{k-1}\big)^{k-t-1} \\ \nonumber
		& = \eta^2(0) \mathlarger{\sum}_{t=0}^{k_0} \big(C_{k-1}\big)^{k-k_0+t-1}\\ \nonumber
		& = \eta^2(0)\big(C_{k-1}\big)^{-k_0}\big(C_{k-1}\big)^{k-1} \mathlarger{\sum}_{t=0}^{k_0} \big(C_{k-1}\big)^{t}\\ 
		& \leq \eta^2(0)\big(C_{k-1}\big)^{-k_0}\big(C_{k-1}\big)^{k-1} (k_0+1), 
	\end{align}
	\noindent
	where $\lim_{k\to\infty} (C_{k-1})^{k-1} = 0$ holds by Lemma \ref{lem:C_limit_1}. Thus, we have
	
	\begin{align}
		\label{eq:Lem_theta_k1_ext_2}
		\lim_{k\to\infty}\mathlarger{\sum}_{t=0}^{k_0} \big(C_{k-1}\big)^{k-t-1} \eta^{2}(t) =0.
	\end{align}

	In addition, (\ref{eq:lr_extra}) holds for $k> k_0 + 1$, which allows us to find an upper-bound for the second term on the right hand side of (\ref{eq:Lem_theta_k1}) as
	
	\begin{align}
		\label{eq:Lem_theta_k1_ext_3}
		\nonumber
		\mathlarger{\sum}_{t=k_0+1}^{k-1} \big(C_{k-1}\big)^{k-t-1}\eta^2(k) &\leq   \frac{(1-C_{k-1}) \epsilon}{\mu} \mathlarger{\sum}_{t=k_0+1}^{k-1} \big(C_{k-1}\big)^{k-t-1}\\ \nonumber
		& = \frac{(1-C_{k-1}) \epsilon}{\mu} \Bigg(\frac{1-\big(C_{k-1}\big)^{k-k_0-1}}{1-C_{k-1}}\Bigg)
		\\
		&\leq \frac{ \epsilon}{\mu}. 
	\end{align}	

	By using (\ref{eq:Lem_theta_k1_ext_1})-(\ref{eq:Lem_theta_k1_ext_3}), and taking the limit of both sides of (\ref{eq:Lem_theta_k1}) results in 
	\begin{align}
		\label{eq:lim_c_2}
		\lim_{k\to\infty}\mathlarger{\sum}_{t=0}^{k-1} \big(C_{k-1}\big)^{k-t-1} \eta(t) \leq \frac{\epsilon}{\mu}.
	\end{align}
	
	Since $\epsilon$ is arbitrary, we complete the proof.

\vspace{3mm}
\balance
\bibliography{ref.bib}
\bibliographystyle{IEEETran}
% bib file to produce the bibliography
% with bibtex (preferred)

\end{document}